\title[Morphological-Symmetry-Equivariant HGNN]{Morphological-Symmetry-Equivariant Heterogeneous Graph Neural Network for Robotic Dynamics Learning}
\author{%
 \Name{Fengze Xie}$^{\ast 1}$ \Email{fxxie@caltech.edu}\\
 \Name{Sizhe Wei}$^{\ast 2}$ \Email{swei@gatech.edu}\\
 \Name{Yue Song}$^{1}$ \Email{yuesong@caltech.edu}\\
 \Name{Yisong Yue}$^{1}$ \Email{yyue@caltech.edu}\\
 \Name{Lu Gan}$^{2}$ \Email{lgan@gatech.edu}\\
 \addr $\ast$ Indicates equal contribution\\
 \addr $^{1}$ California Institute of Technology\\
 \addr $^{2}$ Georgia Institute of Technology
}
\begin{document}
\maketitle
\footnote{$^\dag$ Project website:~\href{https://lunarlab-gatech.github.io/MorphSym-HGNN/}{https://lunarlab-gatech.github.io/MorphSym-HGNN}. The appendix can be found on the project website.}

\begin{abstract}%
We propose MS-HGNN, a Morphological-Symmetry-Equivariant Heterogeneous Graph Neural Network for robotic dynamics learning, which integrates robotic kinematic structures and morphological symmetries into a unified graph network. By embedding these structural priors as inductive biases, MS-HGNN ensures high generalizability, sample and model efficiency. This architecture is versatile and broadly applicable to various multi-body dynamic systems and dynamics learning tasks. We prove the morphological-symmetry-equivariant property of MS-HGNN and demonstrate its effectiveness across multiple quadruped robot dynamics learning problems using real-world and simulated data. Our code is available at~\href{https://github.com/lunarlab-gatech/MorphSym-HGNN/}{https://github.com/lunarlab-gatech/MorphSym-HGNN/}.$^\dag$

\end{abstract}

\begin{keywords}%
    Morphological symmetry, geometric deep learning, graph neural network, robotic dynamics learning, quadruped robots
\end{keywords}

\section{Introduction}
A rigid body system is a collection of interconnected components that do not deform under external forces. Existing approaches to planning and controlling for rigid body systems fall into two categories: safe but inflexible methods and adaptive yet risky methods. Traditional methods provide safety and stability by relying on well-understood dynamics models~\citep{10.5555/561828, spong2005robot}, but they struggle in complex, unpredictable environments where modeling becomes difficult. Conversely, machine learning-based approaches offer greater adaptability by learning dynamic interactions and planning strategies across diverse environments~\citep{doi:10.1126/scirobotics.abk2822, 9560769} but suffer from unseen and highly dynamic environments. 

To bridge traditional and learning-based methods, it is essential to incorporate morphological information from the robot’s structure into our learning architecture. By embedding this structural information, the model can implicitly account for the robot’s physical configuration, enhancing interpretability and data efficiency. The morphology of a rigid body system has two key components: the kinematic structure and morphological symmetry. A kinematic structure~\citep{MRUTHYUNJAYA2003279, TAHERI2023105448} consists of interconnected links connected by joints that allow relative motion, such as rotation or translation. Each joint imposes specific movement constraints, enabling the system to perform complex actions through the combination of simpler joint motions. In robotics, kinematic structures are fundamental for modeling and controlling articulated structures like robotic arms, quadrupeds~\citep{7815333, 10161525, yang2023cajun}, and humanoids. Integrating kinematic structure into the learning model can help establish the relative relationships between each component, aligning the model closely with the robot's physical design. On the other hand, morphological symmetries are structural symmetries in a robot’s body that allow it to mimic certain spatial transformations, such as rotations, reflections, or translations~\citep{Smith2023SO2EquivariantDM, ordoñezapraez2024morphologicalsymmetriesrobotics}. Integrating these geometric priors into the learning model can improve data efficiency, enabling the model to generalize better across configurations and tasks. 

In this work, we propose MS-HGNN, a morphological-symmetry-equivariant heterogeneous graph neural network that integrates both the robotic kinematic structures and morphological symmetries into the learning process. We further validate the symmetry properties of the neural network through theoretical analysis, and demonstrate how these embedded features enhance the model's interpretability and efficiency in extensive robotic dynamics learning experiments. 

\section{Related Work}

\textbf{Rigid Body Systems.} 
In robotics, rigid body systems are essential for representing complex articulated structures like robotic arms, quadrupeds, and humanoids. Traditional rigid body modeling relies on established mathematical frameworks
to describe motion and calculate the forces and torques necessary for desired movements~\citep{spong2005robot}. 
On the other hand, data-driven techniques, such as neural networks and reinforcement learning~\citep{9560769, doi:10.1126/scirobotics.abk2822}, have been introduced to model and control rigid body systems, bringing adaptability and flexibility to these systems in diverse or unstructured settings. 
Recently, several approaches have emerged that bridge classic and data-driven methods, leveraging the strengths of both~\citep{O_Connell_2022, doi:10.1177/02783649231169492, pmlr-v211-neary23a, 10611562, 10706036}. These approaches typically embed physical laws as constraints or regulators within the learning model.

\noindent
\textbf{Geometric Deep Learning.} 
Traditional deep learning methods are effective for grid-like data structures, such as images, but often struggle with irregular, non-Euclidean domains. Geometric deep learning overcomes this limitation by developing architectures that preserve the inherent geometric properties of the data~\citep{wang2021incorporating, pmlr-v162-du22e, 9091314, ordoñezapraez2024morphologicalsymmetriesrobotics}. These properties can be captured through various representations~\citep{bronstein2021geometricdeeplearninggrids}, motivating the use of specialized architectures~\citep{satorras2022enequivariantgraphneural, pmlr-v48-cohenc16, wang2021incorporating, NIPS2017_f22e4747}.
In robotics, geometric deep learning offers a physics-informed approach that enhances model interpretability and improves sample efficiency.

\noindent
\textbf{Physics-Informed Learning for Robotics.} Recent advancements in physics-informed learning have shown significant promise in enhancing learning performance by embedding physical laws and dynamics directly into learning models~\citep{10155901, Djeumou2021NeuralNW}. 
Unlike traditional data-driven approaches, physics-informed methods leverage underlying principles~\citep{RAISSI2019686, NEURIPS2019_26cd8eca, cranmer2020lagrangianneuralnetworks, 10598388} to improve model interpretability, robustness, and data efficiency~\citep{SanchezGonzalez2018GraphNA, Kim2021LearningRS, butterfield2024mihgnnmorphologyinformedheterogeneousgraph}.
Our neural network architecture employs geometric deep learning to embed kinematic structure and symmetry information from rigid-body systems, forming a physics-informed model. Compared to prior work~\citep{ordoñezapraez2024morphologicalsymmetriesrobotics, DBLP:journals/corr/abs-2403-17320, mittal2024symmetryconsiderationslearningtask}, our approach not only achieves the same symmetry guarantees but also captures detailed structural information from the kinematic tree through a graph neural network.

\section{Preliminaries}
\subsection{Morphology-Informed Heterogeneous Graph Neural Network}
Heterogeneous Graph Neural Networks (HGNNs)~\citep{Shi2022}, denoted as $\graphG = (\graphV, \graphE)$, are a type of Graph Neural Network (GNN) designed to handle graphs with multiple types of nodes $\graphV$ and edges $\graphE$, capturing complex relationships and rich semantic information. Unlike traditional GNNs, which assume a uniform graph structure, HGNNs apply specialized aggregation and transformation functions tailored to different node and edge types. This makes them particularly effective in applications such as recommendation systems, knowledge graphs, and robotics, where diverse interactions between entities must be accurately modeled. A Morphology-Informed Heterogeneous Graph Neural Network (MI-HGNN)~\citep{butterfield2024mihgnnmorphologyinformedheterogeneousgraph} is an HGNN with node and edge types directly derived from the system’s kinematic structure. Based on the functional roles of nodes within the kinematic structure, we assign them to distinct node classes $\graphV=\{\graphV_1, \dots, \graphV_n\}$, where each class $\graphV_i=\{v_i^1, \dots, v_i^m\}$ contains individual nodes $v_i^j$. Links in the kinematic structure are represented as edges in the graph, with the edge type $e(v_i,v_j)\in\graphE_{ij}$ depending on the node types at both ends, where $\graphE_{ij}\in\graphE$. For example, in a floating-base system, components like the base, joints, and feet can be represented by distinct types of nodes $\graphV_b$, $\graphV_t$, and $\graphV_f$, shown in Fig.\ref{fig:overview-flowchart}(c), while the links connecting these components are modeled as edges. 

\subsection{Morphological Symmetries in Rigid Body Systems}
A rigid body system is a collection of solid bodies that maintain a fixed shape and size while moving under external forces and torques. When connected through joints allowing relative motion, these bodies form a kinematic tree that defines the structured movement of interconnected rigid bodies. Morphological symmetry arises from replicated kinematic chains and body parts with symmetric mass distributions. The morphological symmetry group $\groupG$ represents feasible state transformation including reflection and rotation that map a robot state $(\vq, \dot{\vq})$ to an equivalent reachable state $(g \morphOp \vq, g \morphOp \dot{\vq})$, where $\vq \in \mathbb{R}^{n_q}$ is the generalized position coordinates with $n_q$ as the number of states. We distinguish between standard group actions ($g \triangleright x$), conjugate actions on linear maps ($g \diamond A$), and morphological symmetry-induced state mappings ($g \morphOp x$), following the conventions in~\cite{ordoñezapraez2024morphologicalsymmetriesrobotics}. The formal definition of morphological symmetry action is given by:
\begin{align}\label{ms_def}
    (g \morphOp \vq, g \morphOp \dot{\vq}) := \left(\begin{bmatrix}
        \mX_g \mX_B \mX_g^{-1}\\
        \rho_{\spaceM}(g)\vq_{js}
    \end{bmatrix}, \begin{bmatrix}
        \mX_g \dot{\mX}_B \mX_g^{-1}\\
        \rho_{\tangent_\vq\spaceM}(g)\dot{\vq}_{js}
    \end{bmatrix} \right).
\end{align}
Here, the base transformation \mbox{$g \morphOp \mX_B = \mX_g \mX_B \mX_g^{-1} \in \mathbb{SE}_d$}, where \mbox{$\mX_B := \begin{bmatrix}
    \mathbf{R}_B & \mathbf{r}_B\\
    \mathbf{0} & 1
\end{bmatrix} \in \mathbb{SE}_d$} represents the base configuration, with \mbox{$\mathbf{R}_B \in \mathbb{SO}_d$} as the rotation matrix representing the base's orientation. The transformation matrix \mbox{$\mX_g := \begin{bmatrix}
    \mathbf{R}_g & \mathbf{r}_g\\
    \mathbf{0} & 1
\end{bmatrix} \in \mathbb{E}_d$} includes \mbox{$\mathbf{R}_g \in \mathbb{O}_d$}, an orthogonal matrix representing reflection or rotation. The joint space transformation is defined as \mbox{$g \morphOp \vq_{js} := \rho_\spaceM(g)\vq_{js}$}, where $\rho_\spaceM(g)$ is the representation of the group element $g$. 
Once the base is identified, the robot's unique kinematic branches are represented by \mbox{$\groupS = \{\groupS_1, \dots, \groupS_k\}$}. Each branch \mbox{$\groupS_i$ has $\ndof(\groupS_i) \in \mathbb{N}$} degrees of freedom and is replicated \mbox{$\nrep(\groupS_i) \in \mathbb{N}$} times throughout the kinematic structure. The instance labels for branch \mbox{$\groupS_i$ are $\groupS_i = \{\groupS_{i,1}, \dots, \groupS_{i, \nrep(\groupS_i)}\}$}. The details of morphological symmetry in kinematic structures are provided in the Appendix.\ref{appendix:kinematic_chain}. This morphological symmetry can be naturally modeled using a GNN, where the kinematic structure is encoded by an adjacency matrix and components like the base, joints, and feet are distinguished by node types. The symmetry is embedded in the GNN architecture itself, enabling weight sharing across repeated structures and improving generalization from proprioceptive sensor inputs.
\begin{figure}[tp]
    \centering
    \vspace{-2mm}
    \includegraphics[width=0.95\textwidth]{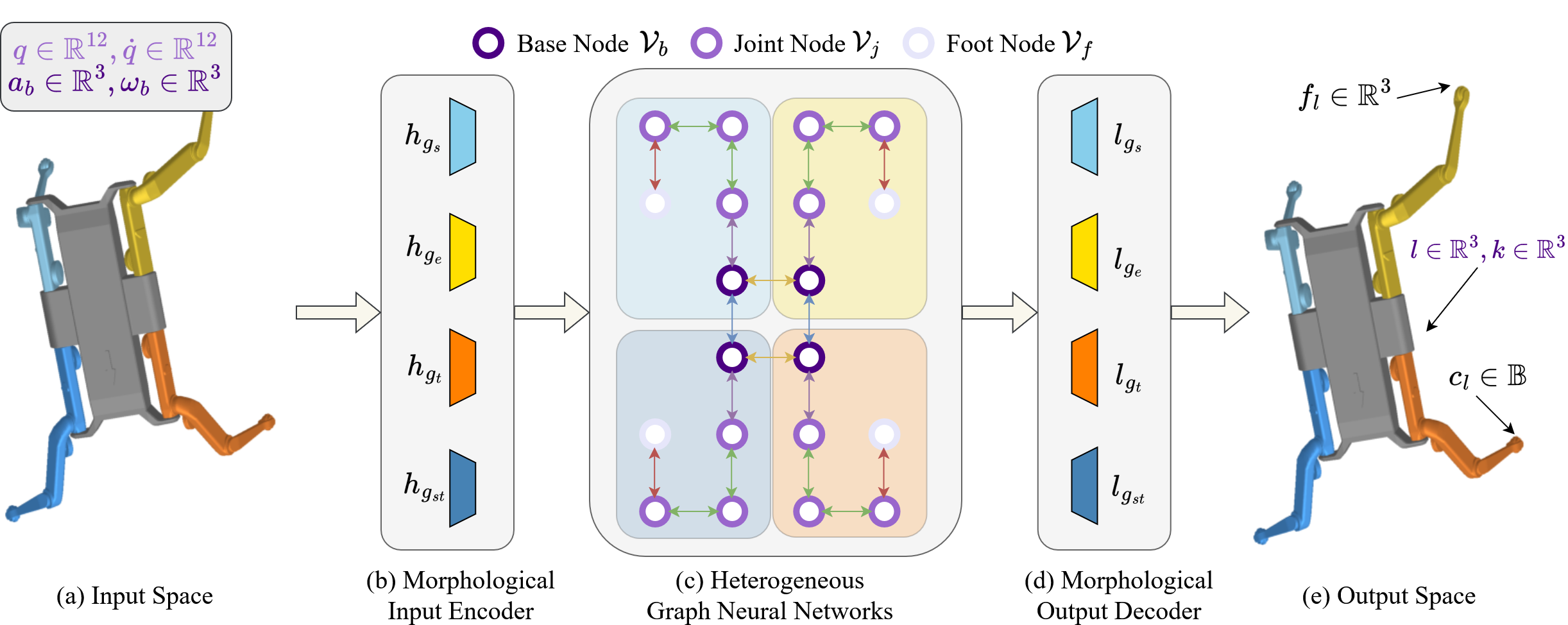} %
    \caption{Overview of the MS-HGNN framework for robots with symmetry type $\groupG := \textcolor[HTML]{6D016B}{\groupK_4}$. (a) The input space consists of the robot's current state observations, which are mapped to corresponding nodes in the HGNN. (b) and (d) The morphological symmetry encoder-decoder pair ensures that the learned representations respect the robot’s morphology. (c) The HGNN is automatically constructed to preserve geometric symmetry. (e) The output space consists of dynamics-relevant variables, obtained from their corresponding nodes in the HGNN.}
    \vspace{-4mm}
    \label{fig:overview-flowchart}
\end{figure}
\begin{figure}[!htp]
    \centering
    \includegraphics[width=\textwidth]{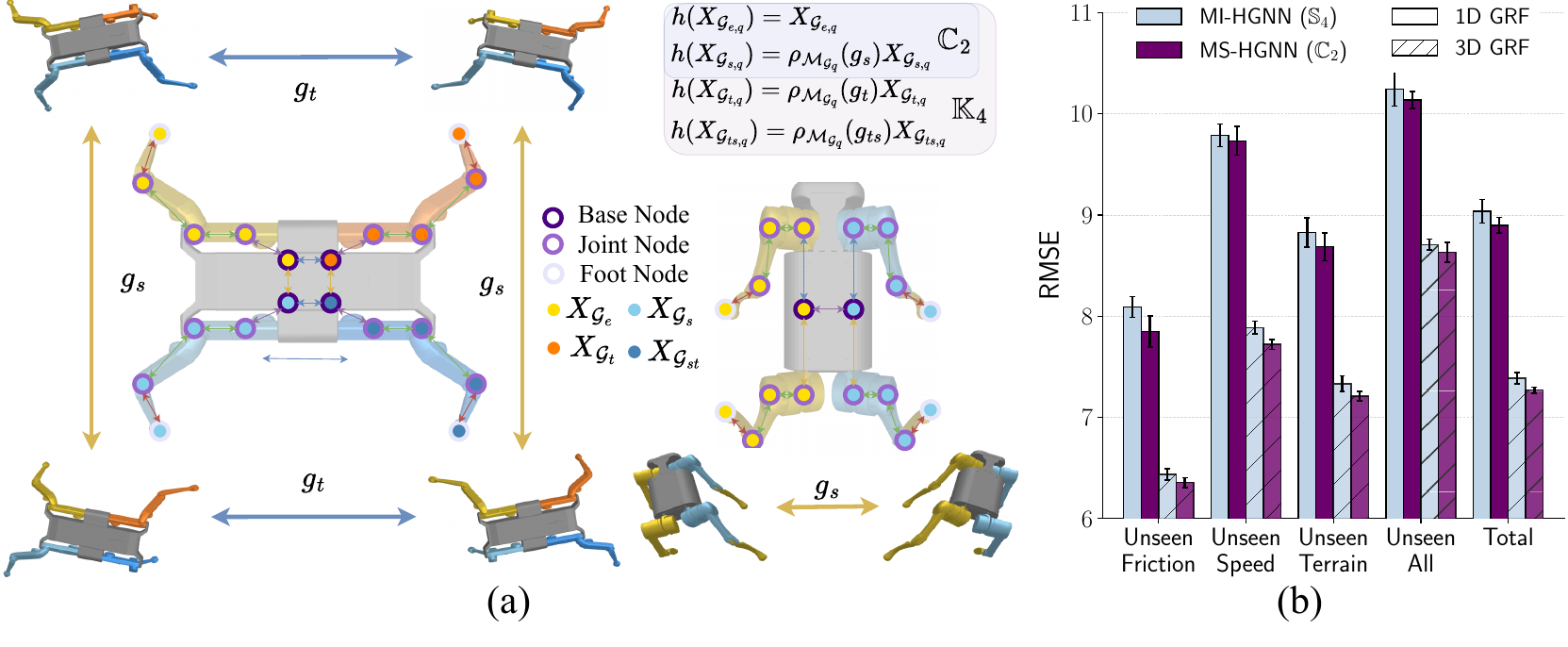} %
    \vspace{-7mm} %
    \caption{(a) The visualization of the MS-HGNN architecture is shown for the morphological symmetry groups $\groupG := \textcolor[HTML]{6D016B}{\groupK_4}$ (left, Solo) and $\groupG := \textcolor[HTML]{8C96C6}{\groupC_2}$ (right, A1). Inputs and outputs of the MS-HGNN are distributed over graph nodes mapped to corresponding robot components (base, joint, foot). Variables representing the entire robot are placed on base nodes. Different node types represent distinct components of the robot’s kinematic structure, with contour colors indicating node types and filling colors denoting the group elements. The encoder and decoder types depend on the group elements, while edges between nodes are determined by node types and the robot’s symmetry. (b) Ground reaction force estimation test RMSE on simulated A1 dataset~\citep{butterfield2024mihgnnmorphologyinformedheterogeneousgraph}.}
    \vspace{-4mm}
    \label{fig:overview}
\end{figure}
\section{Methodology}
This work leverages an HGNN to model the morphological symmetry and kinematic structure of rigid body systems. Our approach consists of two key components as shown in Fig.~\ref{fig:overview-flowchart}: (1) an automatically constructed HGNN that preserves the system's inherent geometric symmetry, and (2) an encoder-decoder module that transforms geometric symmetry into morphological symmetry, ensuring consistency with the system's dynamic properties. In the following sections, we provide a step-by-step framework for constructing an MS-HGNN, guided by the system's kinematic structure and morphological symmetry principles.
\begin{mdframed}[backgroundcolor=lightgray, linewidth=0pt, innerleftmargin=0pt, innerrightmargin=5pt]
    \begin{enumerate}\setlength{\itemsep}{1pt}
\item Determine the morphological symmetry group $\groupG_m < \groupG_\groupE$ and the unique kinematic branches $\groupS$ of the system, where $\groupG_\groupE$ is the generalized euclidean group.
\item Create subgraphs for all kinematic branches as \\$\graphG_i = \{\graphG_{i, 1}(\groupS_{i, 1}), \dots, \graphG_{i, \nrep(\groupS)}(\groupS_{i, \nrep(\groupS_i)})\}$, where $\graphG_{i, j_1}\cong\graphG_{i, j_2}, \forall j_1, j_2 \in \mathbb{N} \le \nrep(\groupS_i)$.
\item Label each subgraph $\graphG_{i, j}$ as $\graphG_{p, q}$, where $p \le |\groupG_m|$ corresponds to an element in group $\groupG_m$, and subgraphs with same $q$ lies in the same orbit. 
\item For any subgraph class $\{\graphG_q\}$, including the base node $\{\graphV_b\}$ that lacks the full set of $|\groupG_m|$ graphs, complete each group orbit by replicating elements along missing transformations and label them as $\graphG_{p, q}$.

\item Connect $\{\graphV_{b, p}\}$ with Cayley Graph~\citep{cayley1878desiderata}. Connect each subgraph $\graphG_{p, q}$ to $\graphV_{b, p}$ with edge type $\graphE_q$, formalizing a graph $\graphG$.
\item Add input encoders and output decoders for each node based on the subgraph class $p$ it belongs to, ensuring morphological symmetry equivariance $\groupG_m$ in our GNN.
\end{enumerate} 
\end{mdframed}

Next, we provide a mathematical proof demonstrating that our constructed graph is equivariant under morphological symmetry transformations. Details of the proof are provided in the Appendix.\ref{appdendix:proof}. After completing the first five steps of our construction process, we obtain a graph $\graphG$ that preserves the system’s inherent geometric symmetry and is composed of subgraphs $\{\graphG_1, \dots, \graphG_q\}$. Each subgraph $\graphG_i$ is further subdivided into instances \mbox{$\{\graphG_{i,1}, \dots, \graphG_{i,p}\}$}, where $p \in \mathbb{N}$ denotes the number of instances and $\nftr(\graphG_i) \in \mathbb{N}$ represents the number of node features per instance. The parameter $q$ corresponds to the types of kinematic chains (e.g., legs, arms), while $p$ identifies the type of element within a group. To ensure that the learned representations respect the morphological symmetry group $\groupG$, we integrate an additional encoder-decoder pair, as shown in Fig.~\ref{fig:overview-flowchart}(b), (d). This enables the HGNN to capture structural equivalences, preserving the morphological symmetry of the robot within the overall learning framework.

We define two types of group actions: the Euclidean reflection and rotation group action, denoted as $g_m \Glact (\cdot)$, and the morphological reflection and rotation group action, denoted as $g_m \morphOp (\cdot)$. 
For each subgraph instance $\graphG_{p,q}$, the Euclidean group action on our graph satisfies the property 
$g_m \Glact \graphG_{p,q} = \graphG_{g_m(p),q},$
where $g_m$ is an element of the morphological transformation group $\groupG_m$. We further define $\rho_{\graphG_q}(g_m) \in \mathbb{R}^{p \times p}$ as the permutation matrix associated with the group action $g_m$.

Consequently, the group action on a stack of subgraph instances can be expressed as:
\begin{equation}
    \rho_{\graphG_q}(g_m)
    \begin{bmatrix}
        \graphG_{p_1,q} &
        \graphG_{p_2,q} &
        \cdots
    \end{bmatrix}^T = 
    \begin{bmatrix}
        \graphG_{g_m(p_1),q} &
        \graphG_{g_m(p_2),q} &
        \cdots
    \end{bmatrix}^T.
\end{equation}

We denote node space representation as an identity matrix as $\rho_{b\spaceM_{\graphG_q}}(g_m) := I_{\nftr(\groupG_m)}$. The graph space permutation matrix $g_m \Glact X_{\graphG} =  \rho_bX_{\graphG}$ is defined as
\begin{equation}
    \rho_b := \operatorname{diag} \big( \rho_{b\spaceM_{[\graphG_1]}}(g_m), \dots, \rho_{b\spaceM_{[\graphG_k]}}(g_m) \big),
    \quad \text{with} \quad
    \rho_{b\spaceM_{[\graphG_i]}}(g_m) := \rho_{\graphG_i}(g_m) \otimes \rho_{b\spaceM_{\graphG_i}}(g_m).
    \label{eq:graph_space_permutation}
\end{equation}
\begin{theorem}[Permutation Automorphism]\label{thm:permutation_auto}
    Assume our $\graphG$ with adjacency matrix $A_\graphG$ and node features $X_\graphG$, where different types of edges and nodes are represented by different integers. The mapping $\phi_{\rho_b}: \graphG\rightarrow\graphG$ is an automorphism if the edge and node features are preserved as: 
    \begin{align}
        \forall \rho_b \in \groupG_m, \quad \phi_{\rho_b}(A_{\graphG}) = \rho_bA_\graphG\rho_b^T = A_{\graphG} \quad \text{and} \quad \phi_{\rho_b}(X_\graphG) = \rho_bX_\graphG = X_{\graphG}
    \end{align}
\end{theorem}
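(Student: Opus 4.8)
The plan is to verify the textbook definition of a heterogeneous-graph automorphism directly from the two algebraic hypotheses, treating them as combinatorial structure-preservation conditions in disguise. Recall that an automorphism of a typed graph is a vertex bijection that preserves adjacency together with edge types and node types/features. So I would first confirm that $\rho_b$ induces an honest vertex bijection $\sigma$ on $\graphV$, and then show that the two invariance identities are exactly the edge- and node-preservation requirements for $\sigma$.

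First I would establish that $\rho_b$ is a permutation matrix, which is what makes $\phi_{\rho_b}$ a well-defined bijection. From the block-diagonal definition in \eqref{eq:graph_space_permutation}, $\rho_b$ is built from blocks $\rho_{\graphG_i}(g_m)\otimes\rho_{b\spaceM_{\graphG_i}}(g_m)$. Each $\rho_{\graphG_i}(g_m)$ is a permutation matrix because the group action obeys $g_m\Glact\graphG_{p,q}=\graphG_{g_m(p),q}$, i.e. $g_m$ merely relabels the instance index $p$ inside an orbit; since $\rho_{b\spaceM_{\graphG_i}}(g_m)=I$ is also a permutation matrix, their Kronecker product is a permutation matrix, and a block-diagonal stack of permutation matrices is again a permutation matrix. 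Hence $\rho_b$ has exactly one nonzero entry per row and column and defines a bijection $\sigma$ of the vertex set.

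Next I would translate the two hypotheses into structure preservation. Conjugation by a permutation matrix relabels rows and columns simultaneously, so entrywise $(\rho_b A_\graphG \rho_b^T)_{\sigma(u),\sigma(v)}=(A_\graphG)_{u,v}$. The hypothesis $\rho_b A_\graphG \rho_b^T = A_\graphG$ therefore gives $(A_\graphG)_{\sigma(u),\sigma(v)}=(A_\graphG)_{u,v}$ for all $u,v\in\graphV$. Because distinct edge types are encoded by distinct integers (and non-edges by a reserved value), this identity says that $\sigma$ sends every edge to an edge of the same type and every non-edge to a non-edge, so $\sigma$ preserves both adjacency and edge type. Identically, $\rho_b X_\graphG = X_\graphG$ gives $X_{\sigma(v)}=X_v$ for every $v$, so $\sigma$ preserves node features and, with them, node types.

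Finally, a vertex bijection preserving adjacency, edge types, and node types/features is by definition an automorphism of the heterogeneous graph, which closes the argument for every $\rho_b\in\groupG_m$. I expect the main obstacle to be the first step rather than the last: one must argue carefully that $\rho_b$ is genuinely a permutation and not merely orthogonal, which hinges on each instance-action $\rho_{\graphG_i}(g_m)$ permuting orbit labels through $g_m\Glact\graphG_{p,q}=\graphG_{g_m(p),q}$, and one must fix the conjugation convention so that invariance of $A_\graphG$ reads as type-faithful structure preservation rather than a weaker spectral invariance. Once the permutation interpretation is pinned down, the remaining implications are immediate.
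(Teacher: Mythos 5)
Your proof is correct, but it takes a genuinely different route from the paper's. The paper argues at the level of the matrix map $\phi_{\rho_b}$ itself: it checks three algebraic properties (labelled injective, surjective, and homomorphism --- e.g.\ $\phi_{\rho_b}(A_{\graphG_1}A_{\graphG_2})=\rho_b A_{\graphG_1}(\rho_b^T\rho_b)A_{\graphG_2}\rho_b^T$) and then declares $\phi_{\rho_b}$ an isomorphism of $\graphG$ onto itself. You instead unpack $\rho_b$ into an honest vertex bijection $\sigma$, verify from the block/Kronecker structure in \eqref{eq:graph_space_permutation} that $\rho_b$ is a permutation matrix, and read the two invariance identities entrywise as $(A_\graphG)_{\sigma(u),\sigma(v)}=(A_\graphG)_{u,v}$ and $X_{\sigma(v)}=X_v$, which is exactly the combinatorial definition of a typed-graph automorphism. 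Your version buys several things the paper's does not: it actually uses the hypothesis that distinct edge and node types are encoded by distinct integers (the paper's proof never invokes this), it makes explicit the step that $\rho_b$ is a permutation rather than merely orthogonal, and it avoids the paper's somewhat loose bookkeeping (its ``injective'' condition as written is really a well-definedness statement, and the identification of a matrix-level map with a graph morphism is left implicit). The paper's version is shorter and emphasizes that $\phi_{\rho_b}$ respects matrix products, which foreshadows how the automorphism composes through message-passing layers in the subsequent equivariance lemma, but as a proof of the automorphism claim itself your direct combinatorial argument is the more rigorous of the two.
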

With the above automorphism, the equivariance to Euclidean symmetry immediately follows:
\begin{lemma}[Euclidean Group Equivariance]
    If $\phi_{\rho_b}: \graphG\rightarrow\graphG$ is an automorphism of graph $\graphG$ and $z_{\graphG}$ is the representation of the GNN based on $\graphG$, the GNN is equivariant to Euclidean group actions~\citep{hamilton2020graph}:
    \begin{align}
    \forall g_m \in \groupG_m,\quad    g_m \Glact z_{\graphG}(X_\graphG) = z_{\graphG}\big(\phi_{\rho_b}(X_\graphG)\big) = z_{\graphG}(\rho_b X_\graphG) = z_{\graphG}(g_m \Glact X_\graphG).
    \end{align}
\end{lemma}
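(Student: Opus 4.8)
The plan is to derive the Euclidean equivariance of $z_{\graphG}$ entirely from two ingredients: the intrinsic permutation equivariance of message-passing GNNs, and the automorphism property established in Theorem~\ref{thm:permutation_auto}. The role of the group structure is minimal at this stage; once we know that $\rho_b$ is a graph automorphism, the statement reduces to a standard fact about how a GNN transforms under a relabeling of its nodes, and the lemma's chain of equalities then follows by unfolding the relevant definitions.

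First I would make the (suppressed) dependence of $z_{\graphG}$ on the adjacency matrix explicit and invoke the permutation equivariance of the GNN, which is the cited result of~\cite{hamilton2020graph}: for any permutation matrix $P$ compatible with the node typing, each message-passing layer aggregates neighbor features through a permutation-invariant reduction (sum, mean, etc.), so relabeling the nodes by $P$ relabels the layer outputs by the same $P$; composing layers preserves this, giving
\begin{align}
    z_{\graphG}(P X_\graphG, P A_\graphG P^T) = P\, z_{\graphG}(X_\graphG, A_\graphG).
\end{align}

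Next I would specialize to $P = \rho_b$, the block-diagonal Kronecker permutation defined in \eqref{eq:graph_space_permutation}. Here Theorem~\ref{thm:permutation_auto} does the essential work: because $\rho_b A_\graphG \rho_b^T = A_\graphG$, the permuted graph coincides with the original one, so the adjacency argument can be held fixed and dropped. This converts the generic ``permute features and structure together'' equivariance into the sharper ``permute features only'' statement
\begin{align}
    z_{\graphG}(\rho_b X_\graphG) = \rho_b\, z_{\graphG}(X_\graphG).
\end{align}
Finally, identifying the graph-space action on the representation with left-multiplication by $\rho_b$, i.e. $g_m \Glact z_{\graphG}(X_\graphG) = \rho_b\, z_{\graphG}(X_\graphG)$, and using the definitions $\phi_{\rho_b}(X_\graphG) = \rho_b X_\graphG$ and $g_m \Glact X_\graphG = \rho_b X_\graphG$, I would assemble the full chain of equalities claimed in the lemma.

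I expect the main obstacle to be the step that licenses holding the adjacency matrix fixed, which is precisely where Theorem~\ref{thm:permutation_auto} enters: without the automorphism property, permuting the input features alone would feed the GNN a differently wired graph and break the equality. A secondary point requiring care is the heterogeneity of the network: since $z_{\graphG}$ applies type-specific aggregation and transformation functions, permutation equivariance only holds for permutations that send each node to a node of the same type. I would therefore verify that $\rho_b$ respects the node typing, which follows from the block structure of \eqref{eq:graph_space_permutation}, where each factor $\rho_{b\spaceM_{[\graphG_i]}}(g_m)$ permutes only within a fixed subgraph class, so the type-indexed weights remain consistent throughout the argument.
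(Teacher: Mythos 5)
Your proposal is correct and follows essentially the same route as the paper, which treats the lemma as an immediate corollary of Theorem~\ref{thm:permutation_auto} combined with the standard permutation equivariance of message-passing GNNs cited from \cite{hamilton2020graph}; your write-up simply makes explicit the two steps the paper leaves implicit (holding the adjacency fixed via the automorphism, and checking that $\rho_b$ respects the node typing). No gaps.
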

However, we would like our neural network to achieve equivariance on morphological reflection and rotation transformation groups, which requires $\forall g_m \in \groupG_m,\ g_m \morphOp z_{\graphG}(X_\graphG) = z_{\graphG}(g_m \morphOp X_\graphG)$, rather than Euclidean reflection and rotation group actions.
\begin{theorem}[Morphological-Symmetry-Equivariant HGNN]\label{theorem:msg-gnn}
With the input encoder $h$ and the output decoder $l$ that satisfies the following condition:
    \begin{align}
        \forall g_{m, p} \in \groupG_m,\quad h(X_{\graphG_{p, q}}) = \rho_{\spaceM_{\graphG_q}}(g_{m, p})X_{\graphG_{p, q}}\quad \text{and} \quad l(X_{\graphG_{p, q}}) = \rho_{\spaceM_{\graphG_q}}(g_{m, p})^{-1} X_{\graphG_{p, q}},
    \end{align}
    where $\rho_{\spaceM_{\graphG_q}}$ denotes the transformation of the coordinate frames attached to each joint belonging to the subgraph class $\graphG_q$. $h$ and $l$ transform Euclidean and Morphological symmetries as follows:
    \begin{align}
        \forall g_m \in \groupG_m,\quad g_m \morphOp l(x) = l(g_m \Glact x) \quad \text{and} \quad 
        g_m \Glact h(x) = h(g_m \morphOp x) 
    \end{align}    
    Our GNN is equivariant to morphological group actions:
    \begin{align}
        \forall g_m \in \groupG_m,\quad g_m \morphOp f_{\graphG}(X_\graphG) = f_{\graphG}(g_m \morphOp X_\graphG).
    \end{align}
    where $f_\graphG$ denotes the graph representation $f_\graphG(X_\graphG) = l(z_\graphG(h(X_\graphG)))$.
\end{theorem}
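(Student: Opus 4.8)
The plan is to prove the morphological equivariance of $f_\graphG = l \circ z_\graphG \circ h$ directly by composition, reducing it to two ingredients that are already (or almost) in hand: the Euclidean equivariance of the bare GNN $z_\graphG$ supplied by the preceding Lemma, and the two intertwining identities $g_m \morphOp l(x) = l(g_m \Glact x)$ and $g_m \Glact h(x) = h(g_m \morphOp x)$ relating the morphological and Euclidean actions through the decoder and encoder. Granting those identities, the conclusion falls out of a short substitution chain that I would write as
\begin{align*}
    g_m \morphOp f_\graphG(X_\graphG)
    &= g_m \morphOp l\big(z_\graphG(h(X_\graphG))\big) \\
    &= l\big(g_m \Glact z_\graphG(h(X_\graphG))\big) \\
    &= l\big(z_\graphG(g_m \Glact h(X_\graphG))\big) \\
    &= l\big(z_\graphG(h(g_m \morphOp X_\graphG))\big) = f_\graphG(g_m \morphOp X_\graphG),
\end{align*}
where the second step pushes the action through $l$ via the decoder intertwiner, the third invokes the Lemma on the transformed input $h(X_\graphG)$, and the fourth pushes the action through $h$ via the encoder intertwiner. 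This portion is purely formal once the two intertwiners are established.

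The substantive work, then, is deriving the two intertwining identities from the explicit definitions $h(X_{\graphG_{p,q}}) = \rho_{\spaceM_{\graphG_q}}(g_{m,p})X_{\graphG_{p,q}}$ and its block-wise inverse $l$. My strategy would be to unpack both group actions on a stack of subgraph instances of class $\graphG_q$ and compare their internal and permutation parts. The Euclidean action $g_m \Glact$ acts on $X_\graphG$ through $\rho_b$ of Eq.~\eqref{eq:graph_space_permutation}; the Kronecker factorization $\rho_{b\spaceM_{[\graphG_i]}}(g_m) = \rho_{\graphG_i}(g_m) \otimes \rho_{b\spaceM_{\graphG_i}}(g_m)$ together with $\rho_{b\spaceM_{\graphG_q}}(g_m) = I_{\nftr(\groupG_m)}$ shows that $\Glact$ merely permutes the instances $\graphG_{p,q} \mapsto \graphG_{g_m(p),q}$ while acting trivially on the internal joint-frame coordinates. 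The morphological action $g_m \morphOp$ performs the same permutation but additionally applies the nontrivial frame transformation $\rho_{\spaceM_{\graphG_q}}$ inside each instance. Since $h$ is defined to multiply each instance by exactly this frame transformation, composing $h$ after $\morphOp$ should cancel the internal part and leave only the permutation, which is precisely $g_m \Glact h(\cdot)$; the decoder identity follows by running the same computation through the inverse representation.

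I expect the main obstacle to be making ``$h$ absorbs the internal transformation'' precise across an entire group orbit rather than for a single element, i.e.\ verifying a cocycle-type consistency: applying the frame transform carried by the original instance, permuting to instance $g_m(p)$, and then applying the encoder's frame transform there must agree with encoding in place and then permuting. This holds because the restriction of $\rho_\spaceM$ to each replicated branch $\groupS_i$ is an induced representation --- a block permutation of the $\nrep(\groupS_i)$ copies twisted by a common internal representation $\rho_{\spaceM_{\graphG_q}}$ --- and $h$ is exactly the intertwiner that block-diagonalizes it into permutation-times-identity form. I would verify this instance-by-instance using the block-diagonal structure of $\rho_b$, treating the base-node blocks $\{\graphV_{b,p}\}$ separately: there $\rho_{b\spaceM_{\graphG_q}} = I$ and the morphological and Euclidean actions already coincide, so the encoder acts as the identity and the identity is trivially intertwined, while the leg/branch blocks carry the full argument above. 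Checking that the two intertwiners are mutually inverse on each orbit then closes the loop and licenses the substitution chain.
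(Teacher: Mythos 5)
Your proposal is correct and follows essentially the same route as the paper's proof: establish the two encoder/decoder intertwining identities $g_m \morphOp l(x) = l(g_m \Glact x)$ and $g_m \Glact h(x) = h(g_m \morphOp x)$, then obtain equivariance of $f_\graphG = l \circ z_\graphG \circ h$ by the identical substitution chain through the Euclidean-equivariance Lemma. The only cosmetic difference is at the ``cocycle consistency'' step you flag: the paper resolves it by a direct matrix computation that explicitly inserts $\rho_{\spaceM_{\graphG_q}}(g_{m,p_2})\rho_{\spaceM_{\graphG_q}}(g_{m,p_2})^{-1}$ and invokes commutativity $g_{m,p_1}\Gcomp g_{m,p_2} = g_{m,p_2}\Gcomp g_{m,p_1}$ of $\groupG_m$, which is the same fact you package as the induced-representation structure of $\rho_\spaceM$ on each replicated branch.
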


Our proposed MS-HGNN architecture is designed to be equivariant to morphological symmetry and is generalizable to various robotic systems and different kinds of tasks. To demonstrate its effectiveness, we specifically implement the architecture for the Mini-Cheetah and Solo robots, which exhibit the $\groupK_4$ symmetry group, and the A1 robot, which exhibits the $\groupC_2$ symmetry group. These cases were chosen due to the availability of experimental data and their suitability for visualization, as illustrated in Fig.~\ref{fig:overview}.

It is important to note that since both $\groupK_4$ and $\groupC_2$ have elements that are involutions, the encoder and decoder operations are structurally identical. However, this equivalence does not hold for higher-order cyclic symmetry groups such as $\groupC_n$ with $n > 2$, which are commonly found in other symmetric rigid-body robotic systems, such as multi-arm robots.

\section{Experiments}
We evaluate MS-HGNN as a generalizable model across various rigid body tasks, with a focusing on quadruped robots. Our experiments empirically demonstrate that the specialized graph structure of MS-HGNN effectively captures morphological information, validated on multiple tasks: contact state detection using real-world data (classification), Ground Reaction Force (GRF) estimation and centroidal momentum estimation using simulated data (regression) from various quadruped platforms. These components are critical for modeling quadruped dynamics and enabling effective control. Given the generalized velocity, acceleration, and torques of the system as $\dot{\vq}$, $\ddot{\vq}$, and $\tau$, respectively, the dynamics of quadrupeds are governed by:
\begin{align}
    \mM(\vq)\ddot{\vq} + \mC(\vq, \dot{\vq})\dot{\vq} + g(\vq) = \mS^T\tau + \mJ_\text{ext}(\vq)^T\vf_\text{ext},
\end{align}
where $\mM(\vq)$ is the inertia matrix, $\mC(\vq, \dot{\vq})$ is the Coriolis matrix, $g(\vq)$ is the gravity vector, $\mS^T$ is the selection matrix, $\vf_\text{ext}$ represents external forces, and $\mJ_\text{ext}(\vq)$ is the external force contact Jacobian. In quadrupeds, external forces are dominated by GRFs, leading to the approximation $\mJ_\text{ext}(\vq)^T\vf_\text{ext} \approx \sum_{l=1}^4 \mJ_l(\vq)^T\vf_l$, where $\vf_l$ and $\mJ_l(\vq)$ are the GRF and its Jacobian for leg $l$. Accurate GRF estimation and contact state detection are vital for capturing the system's dynamics and are foundational for downstream control and planning~\cite{an2023artificial, arena2022ground}. In addition, centroidal momenta-comprising the linear and angular momentum of the robot's center of mass (CoM) relative to an inertial frame-summarize whole-body motion~\citep{lasa2010feature-base} and are particularly useful involving external disturbances. Accurately estimating centroidal momenta enables adaptive control under dynamic conditions, such as strong wind or human interactions. We compare our method against CNN~\citep{DBLP:conf/corl/LinZYG21}, state-of-the-art $\mathbb{G}$-equivariant models CNN-Aug and ECNN with $\groupC_2$ symmetry~\citep{DBLP:conf/rss/ApraezMAM23}, and the morphology-aware model, MI-HGNN~\citep{butterfield2024mihgnnmorphologyinformedheterogeneousgraph}.
To comprehensively evaluate model performance and generalizability, we select three datasets spanning different tasks and platforms: contact states on Mini-Cheetah~\citep{DBLP:conf/corl/LinZYG21}, GRF estimation on A1~\citep{butterfield2024mihgnnmorphologyinformedheterogeneousgraph}, and centroid momenta estimation on Solo~\citep{ordoñezapraez2024morphologicalsymmetriesrobotics}. This setup allows us to assess MS-HGNN across diverse robotic systems and dynamic learning challenges.

\subsection{Contact State Detection for Mini-Cheetah Robot (Classification)}
This task involves predicting the 4-leg contact state of a quadruped robot from its proprioceptive sensor data. We adapt the real-world dataset from~\cite{DBLP:conf/corl/LinZYG21}, collected on a Mini-Cheetah robot~\citep{katz2019mini} using various gaits across diverse terrains (e.g., sidewalk, asphalt, concrete, pebbles, forest, grass, etc.). The dataset consists of \emph{measured} joint angles \mbox{$\vq \in \mathbb{R}^{12}$}, joint angular velocities \mbox{$\dot{\vq} \in \mathbb{R}^{12}$}, base linear acceleration \mbox{$\va_b \in \mathbb{R}^{3}$}, base angular velocity \mbox{$\boldsymbol{\omega}_b \in \mathbb{R}^{3}$} from an Inertial Measurement Unit (IMU), and \emph{estimated} foot positions \mbox{$\vp_l \in \mathbb{R}^{3}$} and velocities \mbox{$\vv_l \in \mathbb{R}^{3}$} via forward kinematics, where \mbox{$l = \{LF, LH, RF, RH\}$} is the index of each leg. The \emph{ground-truth} binary contact state \mbox{$\vc_l \in \mathbb{B}$}, \mbox{$\mathbb{B}=\{0, 1\}$} is obtained offline using a non-causal algorithm~\citep{DBLP:conf/corl/LinZYG21}. The dataset has around 1 million samples synchronized at 1000 Hz. Following the data split from~\cite{DBLP:conf/rss/ApraezMAM23}, we use the same unseen sequences for testing and allocate $85\%$ and $15\%$ of the remaining data for training and validation. As test sequences contain unseen gait/terrain combinations, this setup is also helpful for evaluating model's generalization ability on out-of-distribution data. 

Given that Mini-Cheetah exhibits \mbox{$\mathbb{G}=\groupK_4$} symmetry, we evaluate models using both $\groupK_4$ group and its subgroup $\groupC_2$. The input comprises 150-step histories of \mbox{$[\vq , \dot{\vq}, \va_b, \vw_b, \vp, \vv] \in \mathbb{R}^{54}$} up to a time step $t$, and the prediction is a 16-state contact state for all legs \mbox{$\hat{\vc} \in \mathbb{B}^4$} at $t$. For graph-based models (MI-HGNN and our MS-HGNN), the input data is grouped into a graph structure and assigned to the corresponding node, i.e., base ($\va_b, \boldsymbol{\omega}_b $), joint ($q_j, \dot{q}_j $) being $j$ the joint index, and foot ($\vp_l, \vv_l$) measurements are fed into the corresponding base ($\mathcal{V}_b$), joint ($\mathcal{V}_t$) and foot ($\mathcal{V}_f$) nodes, respectively, according to their indices. The foot-wise contact state prediction is output from the corresponding foot node ($\mathcal{V}_f$). For our MS-HGNN-$\mathbb{C}_2$ and MS-HGNN-$\mathbb{K}_4$ models, all base nodes receive identical inputs. We employ 8 message-passing layers with a hidden size of 128 and train the network for up to 49 epochs using a learning rate of $10^{-4}$. We evaluate all models using the metrics form~\cite{DBLP:conf/rss/ApraezMAM23, butterfield2024mihgnnmorphologyinformedheterogeneousgraph}, including foot-wise binary F1-score, averaged F1-score across legs, and 16-state contact state accuracy (correct only if all four legs are classified correctly).

\begin{figure}[t!]
\captionsetup{format=plain}
    \centering
    \includegraphics[width=\textwidth]{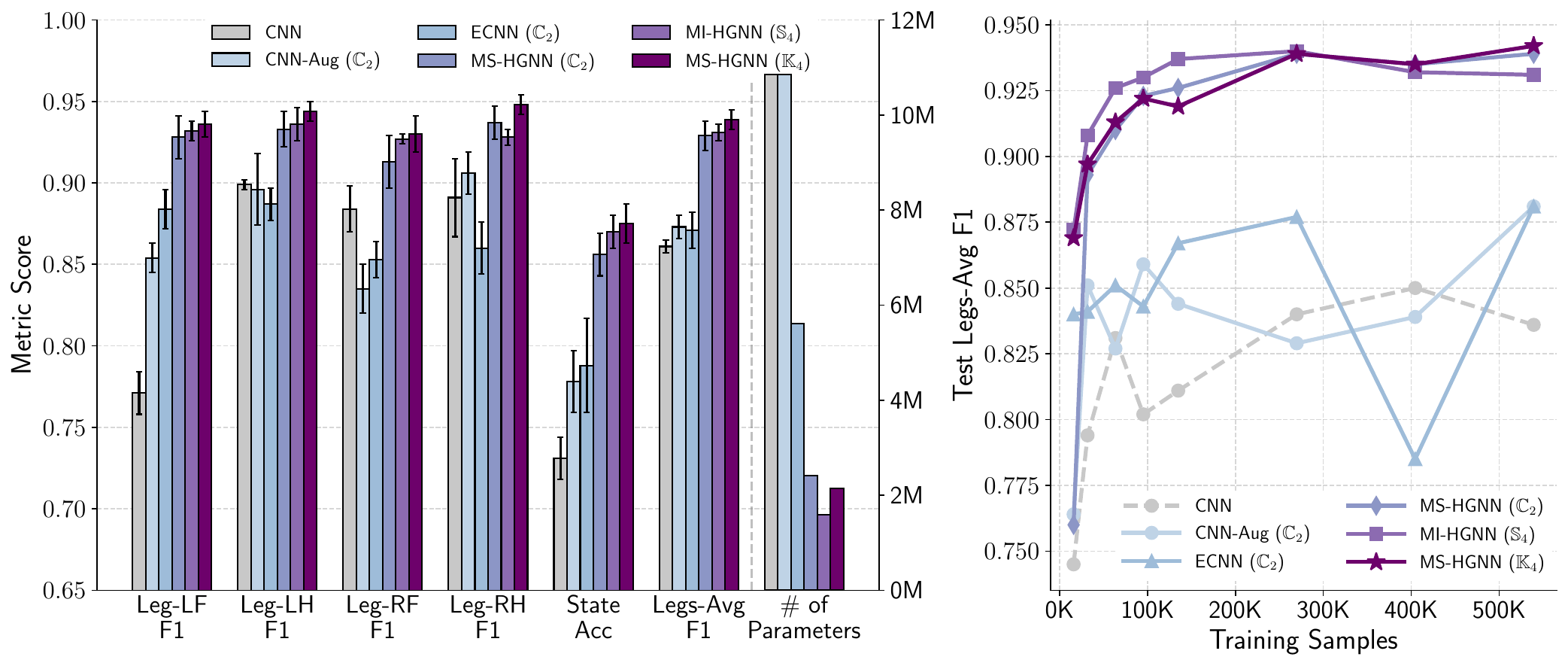}
    \caption{\textbf{Contact state detection results} on the real-world Mini-Cheetah dataset~\citep{DBLP:conf/corl/LinZYG21}. Left: F1 scores per leg, averaged F1 score, and 16-state accuracy (4-run average). Parameter counts are shown for each method. Right: Averaged F1 scores versus training set size. MS-HGNN (\textcolor[HTML]{8C96C6}{$\groupC_2$} \& \textcolor[HTML]{6D016B}{$\groupK_4$}) achieve around 0.9 averaged F1-score with just 5$\%$ of training data.}
    \vspace{-6mm}
    \label{fig:contact-detection-exp}
\end{figure}

The classification results (mean $\pm$ std across 4 runs) and parameter sizes are summarized in Fig.~\ref{fig:contact-detection-exp}-left.$^1$\footnote{$^1$ Detailed numerical results corresponding to all figures are included in Appendix~\ref{appendix:results}.} Compared to non-graph-based models (CNN, CNN-Aug, ECNN), graph-based networks achieve substantially better performance with significantly fewer parameters. Specifically, MS-HGNN-$\mathbb{K}_4$ improved $11\%$ contact state accuracy over ECNN, the best-performing non-graph-based model, using only $38\%$ of ECNN's parameters. This demonstrated the effectiveness and efficiency of using morphological structure via graph representations. The morphology-informed graph network enforces information flow consistent with the robot's kinematic structure, embedding physical priors into message-passing. This architecture utilizes physical knowledge as a \emph{prior} and increases the model's \emph{causality}, and also reduces the parameter count needed to capture complex dynamics of the robot. Among graph-based models, our proposed MS-HGNN-$\mathbb{K}_4$ outperforms MI-HGNN in both averaged F1-score (0.939 vs. 0.931) and accuracy (0.875 vs. 0.870), showing the benefit of preserving morphological-symmetry. In contrast, MI-HGNN follows geometric symmetry $\mathbb{S}_4$, permutation-equivariant for any legs, which over-constrains the model and leads to suboptimal results. Among $\mathbb{C}_2$ models (excluding MI-HGNN), MS-HGNN-$\mathbb{C}_2$ achieves the best performance. Furthermore, the performance gap between $\mathbb{K}_4$ and $\mathbb{C}_2$ variants of MS-HGNN emphasizes the benefits of exploiting the morphological symmetries in robotic systems, which is offering equivalent to doubling the effective dataset size through $\mathbb{K}_4$ compared to $\mathbb{C}_2$. 

\noindent
\textbf{Trainable parameters in $\mathbb{G}$-equivariant networks.} $\mathbb{G}$-equivariant network reduces the trainable parameters by $1/|\mathbb{G}|$ ($|\mathbb{G}|$ is the group order) compared to unconstrained neural network of the same architectural. Thus, ECNN-$\groupC_2$ or EMLP-$\groupC_2$~\citep{DBLP:conf/rss/ApraezMAM23} have approximately twice as many parameters as ECNN-$\mathbb{K}_4$ or EMLP-$\groupK_4$. Interestingly, our MS-HGNN achieves $\mathbb{G}$-equivariance via structured graphs and minimal edge connections instead of $\mathbb{G}$-equivariant layers, yielding comparable parameters for MS-HGNN-$\mathbb{C}_2$ and MS-HGNN-$\mathbb{K}_4$ (see Fig.~\ref{fig:contact-detection-exp}-left).

\noindent
\textbf{Sample efficiency of MS-HGNN.} We assess sample efficiency by varying training set size and reporting the test averaged F1-scores (Fig.~\ref{fig:contact-detection-exp}-right). Like MI-HGNN, both MS-HGNN variants ($\mathbb{C}_2$ and $\mathbb{K}_4$) achieve $\sim$0.9 averaged F1-scores with only 5$\%$ of the training data. This demonstrates the utility of morphological priors in real-world settings, where data is limited and expensive to collect.

\subsection{Ground Reaction Force Estimation for A1 Robot (Regression)}
Estimating ground reaction forces is essential for accurate legged robot dynamics modeling and control. Our graph-based network is well-suited to this task, naturally integrating multi-modal sensory inputs from local frames via message passing. We evaluate on the simulated GRF dataset from~\cite{butterfield2024mihgnnmorphologyinformedheterogeneousgraph}, collected using the \textsc{Quad-SDK} simulator~\citep{norby2022quad} on an A1 quadruped robot exhibiting $\mathbb{G}=\groupC_2$ symmetry. The data is synchronized at 500 Hz and includes joint states (\mbox{$\vq \in \mathbb{R}^{12}$}, \mbox{$\dot{\vq} \in \mathbb{R}^{12}$}, \mbox{$\boldsymbol{\tau} \in \mathbb{R}^{12}$}), base linear acceleration ($\boldsymbol{a}_b \in \mathbb{R}^3$), base angular velocity ($\boldsymbol{\omega}_b \in \mathbb{R}^3$), and ground-truth GRFs ($\boldsymbol{f}_l \in \mathbb{R}^3$, $l$ is the leg index). Following~\cite{butterfield2024mihgnnmorphologyinformedheterogeneousgraph}, we use a 150-step history of input to predict both $Z$-axis (1D) and full 3D GRFs. The hyperparameters and training settings for MS-HGNN and MI-HGNN remain consistent with the previous task. Fig~\ref{fig:overview}(b) presents test the Root Mean Square Error (RMSE) over 4 runs on test sequences with unseen terrain frictions, robot speeds, and terrain types. MS-HGNN-$\mathbb{C}_2$ consistently achieves lower RMSE than MI-HGNN, with an average improvement of $1.62\%$ in 3D and $1.50\%$ in 1D GRF prediction. The relatively modest gain in 3D is attributed to the low GRF magnitudes in the $X$/$Y$ directions in this dataset. Theses results highlight the benefit of preserving morphological symmetry in MS-HGNN over the heuristic design of MI-HGNN for force estimation tasks.

\begin{figure}[!t]
    \centering
    \includegraphics[width=\textwidth]{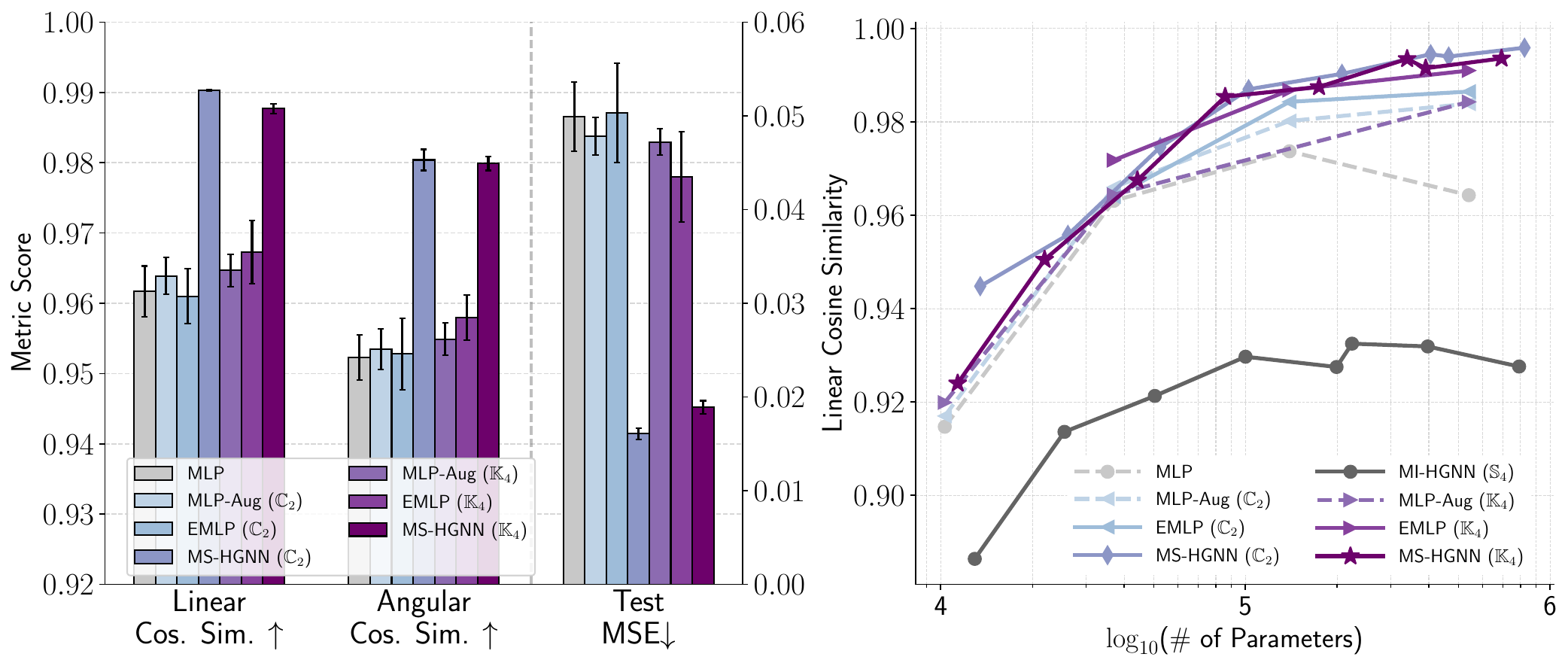}
    \caption{\textbf{Centroidal momentum estimation results} on the synthetic Solo dataset~\citep{DBLP:conf/rss/ApraezMAM23}. Left: Test linear/angular cosine similarity and MSE of predictions, averaged over 4 runs. Right: Linear cosine similarity for models of varying sizes. Our MS-HGNN (\textcolor[HTML]{8C96C6}{$\groupC_2$} and \textcolor[HTML]{6D016B}{$\groupK_4$}) exhibit superior model efficiency without overfitting.}
    \vspace{-4mm}
    \label{fig:com-exp}
\end{figure}

\subsection{Centroidal Momentum Estimation for Solo Robot (Regression)}
In this experiment, we estimate the robot's linear \mbox{$\boldsymbol{l} \in \mathbb{R}^3$} and angular momentum \mbox{$\boldsymbol{k} \in\mathbb{R}^3$} from its joint-space position and velocities \mbox{$\vq \in \mathbb{R}^{12}, \dot{\vq} \in \mathbb{R}^{12}$}. The simulated dataset is generated by~\cite{ordoñezapraez2024morphologicalsymmetriesrobotics} using \textsc{Pinocchio}~\citep{carpentier2019pinocchio} for the Solo quadruped robot exhibiting $\mathbb{G} = \mathbb{K}_4$ symmetry. Different from previous contact estimation problems, this task introduces a new challenge of predicting \emph{angular} momentum from multiple base nodes. To adapt MS-HGNN to this task, we construct models under $\mathbb{C}_2$ and $\mathbb{K}_4$ by attaching the morphology encoder to all joint nodes and morphology decoder to base nodes, and use the MSE losses across base nodes for training. We compare our models ($\mathbb{C}_2$ and $\mathbb{K}_4$) with the unconstrained MLP, MLP-Aug, EMLP~\citep{DBLP:conf/rss/ApraezMAM23}, and MI-HGNN~\citep{butterfield2024mihgnnmorphologyinformedheterogeneousgraph}. Quantitative results (mean and standard deviation over 4 runs) evaluated by cosine similarity and MSE are given in Fig.~\ref{fig:com-exp}-left. MI-HGNN is excluded from the figure due to its incomparable performance (linear cosine similarity 0.9301$\pm$0.0017, angular cosine similarity 0.5173 $\pm$ 0.0016, test MSE 0.3421 $\pm$ 0.0009). Our model outperform all baselines by a significant margin. The degraded performance of MI-HGNN stems from its use of $\groupS_4$ symmetry, which misaligns with the robot's true morphological structure, making it ineffective in learning angular dynamics. In contrast, MS-HGNN embeds $\groupC_2$ and $\groupK_4$ symmetries, enabling effective learning of both linear and angular momentum. We further evaluate the \textbf{model efficiency} of MS-HGNN by varying the numbers of parameters and reporting test linear cosine similarity in Fig.~\ref{fig:com-exp}-right. Notably, MS-HGNN-$\mathbb{C}_2$ achieves 0.9448 cosine similarity with only 13,478 parameters, showing high model efficiency. Furthermore, the performance of both MS-HGNN variants improves steadily as model size increases, whereas MI-HGNN and MLP tend to overfit when scaled up.

\section{Conclusions}
In this work, we introduce MS-HGNN, a general and versatile network architecture for robotic dynamics learning by integrating both robotic kinematic structures and morphological symmetries. Through rigorous theoretical proof and extensive empirical validation, we demonstrated the advantages of using a morphology-informed graph network structure and morphological-symmetry-equivariant property in robotic dynamics learning. Experiments show that MS-HGNN consistently achieves superior performance, generalizability, sample efficiency and model efficiency across a variety of tasks and platforms, making MS-HGNN particularly suitable for data-scarce robotic applications. Furthermore, the modularity of MS-HGNN allows easy adaptation to diverse robotic systems with varying morphological structures. Future work will extend the framework to embed temporary symmetries in robotic systems and deploy to real robot for more challenging tasks.

\acks{Y. Song was supported by gifts from Cisco and OpenAI. We thank D. Butterfield, Z. Gan, and X. Wu for insightful discussions, and L. Zhao for assistance with the experiments.}

\bibliography{reference}
\newpage

\appendix

\section{Morphological Symmetry of Kinematic Chains}\label{appendix:kinematic_chain}

A typical quadruped consists of a single kinematic chain $\groupS_s = \{\groupS_{leg}\}$, which is replicated $\nrep(\groupS_{leg}) = 4$ times. The action of a morphological symmetry in the joint space results in a permutation of the roles of branches with the same type denoted as $g \Glact s_{i, j} := s_{i, g(j)} \in \mathbb{S}_i$ is the label that $j$ is mapped to under the permutation induced by $g$. This leads to the decomposition of the joint space configuration:
\begin{equation}
    \small
 \begin{split}
    g \Glact s_{i,j} &:= s_{i, g(j)} \in \mathbb{S}_i, \quad \text{and} \\
    \rho_{\groupS_i}(g)
    \begin{bmatrix}
            s_{i,1} & s_{i,2} & \cdots
        \end{bmatrix}^T
        & =
        \begin{bmatrix}
            s_{i, g(1)} & s_{i, g(2)} & \cdots
        \end{bmatrix}^T \quad
        | \quad \forall i \in [1, k], j\in[1, \nrep(s_i)],
    \end{split}
    \label{eq:permutation_per_substructures}
\end{equation}
where $\rho_{\groupS_i}(g)$ is the permutation representation acting on the labels of the instances of branch type $s_i$. Following our example with the quadruped robot, the action of $g$ induces a permutation of the left and right configurations $g \Glact s_{leg, lf} = s_{leg, rf}$, $g \Glact s_{leg, rf} = s_{leg, lf}$, $g \Glact s_{leg, lh} = s_{leg, rh}$, and $g \Glact s_{leg, rh} = s_{leg, lh}$. Given that these permutations do not mix the distinct branch types, we can adopt a basis for the joint space configuration space, leading to the decomposition of its associated group representation.
\begin{equation}
    \small
    \begin{split}
        \spaceM &:= \spaceM_{[\groupS_1]} \times \ldots \times \spaceM_{[\groupS_k]} \subseteq \mathbb{R}^{n_j},
        \quad \spaceM_{[\groupS_i]}:= \otimes_{j=1}^{\nrep(\groupS_i)} \spaceM_{\groupS_i} \quad \text{and} \\
        \rho_\spaceM &:= \rho_{\spaceM_{[\groupS_1]}} \oplus \cdots \oplus \rho_{\spaceM_{[\groupS_k]}}, \quad \rho_{\spaceM_{[\groupS_i]}} := \rho_{\groupS_i} \otimes \rho_{\spaceM_{\groupS_i}}, 
    \end{split}
    \label{eq:joint_space_configuration_decomposition}
\end{equation}
where $\spaceM_{\groupS_i} \subseteq\mathbb{R}^{\ndof(\groupS_i)}$ represents the configuration space of a single instance of type $\groupS_i$. For further details, we refer the reader to \cite{ordoñezapraez2024morphologicalsymmetriesrobotics}.

\section{Proof Details}\label{appdendix:proof}
\renewcommand{\thetheorem}{\ref{thm:permutation_auto}} %
\begin{theorem}[Permutation Automorphism]
    Assume our $\graphG$ with adjacency matrix $A_\graphG$ and node features $X_\graphG$, where different types of edges and nodes are represented by different integers. The mapping $\phi_{\rho_b}: \graphG\rightarrow\graphG$ is an automorphism if the edge and node features are preserved as: 
    \begin{align}
       \forall \rho_b \in \groupG_m, \quad \phi_{\rho_b}(A_{\graphG}) = \rho_bA_\graphG\rho_b^T = A_{\graphG} \quad \text{and} \quad \phi_{\rho_b}(X_\graphG) = \rho_bX_\graphG = X_{\graphG}
    \end{align}
\end{theorem}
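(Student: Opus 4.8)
The plan is to unpack the definition of an automorphism of a labeled (heterogeneous) graph and show that the two stated matrix identities are exactly its defining conditions, once we verify that $\rho_b$ acts as a genuine vertex permutation rather than an arbitrary orthogonal matrix. First I would establish that $\rho_b$, as constructed in \eqref{eq:graph_space_permutation}, is a permutation matrix on the vertex set. Each block $\rho_{b\spaceM_{[\graphG_i]}}(g_m) = \rho_{\graphG_i}(g_m) \otimes \rho_{b\spaceM_{\graphG_i}}(g_m)$ is a Kronecker product of the permutation matrix $\rho_{\graphG_i}(g_m)$, which permutes the instances within subgraph class $\graphG_i$, and the identity $\rho_{b\spaceM_{\graphG_i}}(g_m) = I_{\nftr(\groupG_m)}$; hence each block, and therefore the block-diagonal $\rho_b$, is itself a permutation matrix. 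Let $\sigma: \graphV \to \graphV$ denote the induced vertex permutation, so that $\phi_{\rho_b}$ is the corresponding bijection.

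Next I would translate the two matrix conditions into combinatorial statements about $\sigma$. Using the entrywise identity $(\rho_b A_\graphG \rho_b^T)_{ij} = (A_\graphG)_{\sigma^{-1}(i), \sigma^{-1}(j)}$, the condition $\rho_b A_\graphG \rho_b^T = A_\graphG$ is equivalent to $(A_\graphG)_{\sigma(k), \sigma(l)} = (A_\graphG)_{kl}$ for all $k,l$. Because distinct integer entries of $A_\graphG$ simultaneously encode the presence and the type of each edge, this says precisely that $\sigma$ preserves adjacency together with edge type, i.e. an edge $e(v_k, v_l) \in \graphE_{kl}$ is mapped to an edge of the same type between $v_{\sigma(k)}$ and $v_{\sigma(l)}$. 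Similarly, from $(\rho_b X_\graphG)_i = (X_\graphG)_{\sigma^{-1}(i)}$, the condition $\rho_b X_\graphG = X_\graphG$ is equivalent to $(X_\graphG)_{\sigma(k)} = (X_\graphG)_k$, which is exactly node-type preservation: every vertex maps to a vertex of the same type.

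Then I would assemble these facts into the definition of a heterogeneous-graph automorphism: a type-preserving bijection of $\graphV$ whose induced edge map preserves edge types. The first step supplies the bijection, and the two established equivalences supply the edge-type and node-type preservation requirements, so $\phi_{\rho_b}$ is an automorphism under the stated hypotheses.

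I expect the only delicate point to be the bookkeeping in the first step, namely confirming that the Kronecker/block-diagonal structure of \eqref{eq:graph_space_permutation} yields an honest vertex permutation, since this is what licenses the entrywise rewriting $(\rho_b A_\graphG \rho_b^T)_{ij} = (A_\graphG)_{\sigma^{-1}(i), \sigma^{-1}(j)}$ used throughout. Once that is in place, the remainder is a direct definitional matching that involves no substantive computation.
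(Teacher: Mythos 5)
Your proposal is correct, but it takes a genuinely different route from the paper's own proof. The paper argues algebraically: it asserts that $\phi_{\rho_b}$ is injective, surjective, and a homomorphism with respect to matrix multiplication (using $\rho_b^T\rho_b = I$), and concludes that $\phi_{\rho_b}$ is an isomorphism from $\graphG$ to itself, hence an automorphism. You instead work combinatorially: you first verify that the block-diagonal Kronecker structure of $\rho_b$ yields an honest vertex permutation $\sigma$, then translate $\rho_b A_\graphG \rho_b^T = A_\graphG$ and $\rho_b X_\graphG = X_\graphG$ entrywise into the statements that $\sigma$ preserves edge types and node types, which is exactly the definition of an automorphism of a labeled heterogeneous graph. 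Your route is arguably the more rigorous of the two: the paper's ``injectivity'' check is really only well-definedness, and its ``surjectivity'' check relies on the idempotence $\phi_{\rho_b}(\phi_{\rho_b}(A_\graphG)) = \phi_{\rho_b}(A_\graphG)$, which already presupposes the preservation hypothesis; neither is the standard way to certify a bijection on the vertex set. What your argument buys is a direct link between the matrix identities and the graph-theoretic notion of automorphism actually needed for the downstream Euclidean-equivariance lemma (which invokes permutation-invariance of message passing over typed nodes and edges); what the paper's approach buys is brevity and an explicit verification that $\phi_{\rho_b}$ respects composition, which is implicit in your permutation-based formulation. Your identified delicate point, that $\rho_b$ must be a genuine permutation matrix to license the rewriting $(\rho_b A_\graphG \rho_b^T)_{ij} = (A_\graphG)_{\sigma^{-1}(i),\sigma^{-1}(j)}$, is well taken and is handled correctly by observing that a Kronecker product of a permutation matrix with an identity block, assembled block-diagonally, is again a permutation matrix.
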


\begin{proof}
It is easy to find out that the mapping $\phi_{\rho_b}$ satisfies the following properties:
\begin{align}
    \text{Injective:}\quad  \forall \rho_b \in \groupG_m,\quad \text{if} \quad A_{\graphG_1} = A_{\graphG_2}, \quad \phi_{\rho_b}(A_{\graphG_1}) = \phi_{\rho_b}(A_{\graphG_2})\nonumber \\
    \forall \rho_b \in \groupG_m,\quad \text{if} \quad X_{\graphG_1} = X_{\graphG_2},\quad \phi_{\rho_b}(X_{\graphG_1}) = \phi_{\rho_b}(X_{\graphG_2})\\
    \text{Surjective:}\quad \forall \rho_b \in \groupG_m,\quad \phi_{\rho_b}(\phi_{\rho_b}(A_{\graphG})) = \phi_{\rho_b}(A_{\graphG})\quad  \text{and} \quad \phi_{\rho_b}(\phi_{\rho_b}(X_{\graphG})) = \phi_{\rho_b}(X_{\graphG})\\
    \text{Homomorphism:}\quad \forall \rho_b \in \groupG_m,\quad\phi_{\rho_b}(A_{\graphG_1}A_{\graphG_2}) =  \rho_bA_{\graphG_1}(\rho_b^T\rho_b)A_{\graphG_2}\rho_b^T=\phi_{\rho_b}(A_{\graphG_1})\phi_{\rho_b}(A_{\graphG_2})\nonumber\\
    \forall \rho_b \in \groupG_m,\quad\phi_{\rho_b}(X_{\graphG_1}X_{\graphG_2}) = \rho_bX_{\graphG_1}\rho_bX_{\graphG_2}= \phi_{\rho_b}(X_{\graphG_1})\phi_{\rho_b}(X_{\graphG_2})
\end{align}
    Hence $\phi$ is an isomorphism from $\graphG$ to $\graphG$, which is also known as an automorphism.
\end{proof}

\renewcommand{\thetheorem}{\ref{theorem:msg-gnn}} %
\begin{theorem}[Morphological-Symmetry-Equivariant HGNN]
With the input encoder $h$ and the output decoder $l$ that satisfies the following condition:
    \begin{align}
        \forall g_{m, p} \in \groupG_m,\quad h(X_{\graphG_{p, q}}) = \rho_{\spaceM_{\graphG_q}}(g_{m, p})X_{\graphG_{p, q}}\quad \text{and} \quad l(X_{\graphG_{p, q}}) = \rho_{\spaceM_{\graphG_q}}(g_{m, p})^{-1} X_{\graphG_{p, q}},
    \end{align}
    where $\rho_{\spaceM_{\graphG_q}}$ denotes the transformation of the coordinate frames attached to each joint belonging to the subgraph class $\graphG_q$. $h$ and $l$ transform Euclidean and Morphological symmetries as follows:
    \begin{align}
        \forall g_m \in \groupG_m,\quad g_m \morphOp l(x) = l(g_m \Glact x) \quad \text{and} \quad 
        g_m \Glact h(x) = h(g_m \morphOp x) 
    \end{align}    
    Our GNN is equivariant to morphological group actions:
     \begin{align}
        \forall g_m \in \groupG_m,\quad g_m \morphOp f_{\graphG}(X_\graphG) = f_{\graphG}(g_m \morphOp X_\graphG).
    \end{align}
    where $f_\graphG$ denotes the graph representation $f_\graphG(X_\graphG) = l(z_\graphG(h(X_\graphG)))$.
\end{theorem}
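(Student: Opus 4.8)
The plan is to prove the morphological equivariance $g_m \morphOp f_{\graphG}(X_\graphG) = f_{\graphG}(g_m \morphOp X_\graphG)$ by threading the morphological action through the three stages of $f_\graphG = l \circ z_\graphG \circ h$, converting it into a Euclidean (permutation) action in the middle stage and back. Taking the three ingredients---the decoder intertwining identity $g_m \morphOp l(x) = l(g_m \Glact x)$, the Euclidean Group Equivariance Lemma $g_m \Glact z_\graphG(\cdot) = z_\graphG(g_m \Glact \cdot)$ (which rests on the automorphism of Theorem~\ref{thm:permutation_auto}), and the encoder intertwining identity $g_m \Glact h(x) = h(g_m \morphOp x)$---as established, the substitution chain reads
\[
g_m \morphOp f_\graphG(X_\graphG)
= g_m \morphOp l\big(z_\graphG(h(X_\graphG))\big)
= l\big(g_m \Glact z_\graphG(h(X_\graphG))\big)
= l\big(z_\graphG(g_m \Glact h(X_\graphG))\big)
= l\big(z_\graphG(h(g_m \morphOp X_\graphG))\big)
= f_\graphG(g_m \morphOp X_\graphG).
\]
The first and last equalities collapse the definition of $f_\graphG$, the second and fourth use the decoder and encoder intertwiners respectively, and the middle uses the already-proven Euclidean equivariance.

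The genuine content lies in justifying the two intertwining identities from the block-wise $\rho$-conditions, since the chain above treats them as black boxes. For this I would decompose both sides block-by-block over the subgraph instances $\graphG_{p,q}$. The Euclidean action $g_m \Glact$ acts purely as the permutation $\rho_b$ of Eq.~\eqref{eq:graph_space_permutation}, relabelling instances via $p \mapsto g_m(p)$ while leaving each block's node-feature content untouched (recall $\rho_{b\spaceM_{\graphG_q}}(g_m) = I$). By contrast, $g_m \morphOp$ combines this same instance permutation with the per-instance coordinate-frame transformation $\rho_{\spaceM_{\graphG_q}}$ inherited from the morphological action of Eq.~\eqref{ms_def}. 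Since $h$ applies exactly $\rho_{\spaceM_{\graphG_q}}(g_{m,p})$ to block $\graphG_{p,q}$, computing $h(g_m \morphOp X_\graphG)$ and $g_m \Glact h(X_\graphG)$ block-by-block should reduce both to the same permuted, frame-aligned features, giving $g_m \Glact h = h \circ g_m \morphOp$; the decoder identity then follows symmetrically from the inverse relation $l(X_{\graphG_{p,q}}) = \rho_{\spaceM_{\graphG_q}}(g_{m,p})^{-1} X_{\graphG_{p,q}}$.

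The main obstacle I anticipate is the representation-theoretic bookkeeping inside that block-wise verification: one must check that the per-instance frame maps $\rho_{\spaceM_{\graphG_q}}(g_{m,p})$ and the relabelling $p \mapsto g_m(p)$ are mutually compatible, i.e.\ that ``transform the content, then relabel the instance'' agrees with ``relabel, then transform in the new instance's frame.'' This amounts to a consistency (cocycle) condition tying the per-instance labels $g_{m,p}$ to the global element $g_m$ under group multiplication, and it is precisely here that the construction steps---labelling each subgraph by a group element and completing every orbit along the missing transformations---must be invoked to guarantee the representations line up. I expect this to simplify sharply for the groups of interest: because every nontrivial element of $\groupK_4$ and $\groupC_2$ is an involution, $\rho_{\spaceM_{\graphG_q}}(g_{m,p})^{-1} = \rho_{\spaceM_{\graphG_q}}(g_{m,p})$, so the encoder and decoder coincide and the compatibility condition collapses to checking that a single transformation squares to the identity, removing most of the index-tracking difficulty.
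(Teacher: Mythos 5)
Your proposal is correct and follows essentially the same route as the paper: the identical three-stage substitution chain through $l \circ z_\graphG \circ h$ using the encoder/decoder intertwiners and the Euclidean equivariance lemma, with the block-wise verification of the intertwining identities carried out in the paper exactly as you sketch, via the homomorphism property $\rho_{\spaceM_{\graphG_q}}(g_{m,p_1})\rho_{\spaceM_{\graphG_q}}(g_{m,p_2}) = \rho_{\spaceM_{\graphG_q}}(g_{m,p_1 \Gcomp p_2})$ and commutativity of $\groupG_m$, which is precisely the compatibility condition you anticipated.
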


\begin{proof}
    With the pre-defined decoder $l$, we can show that the Euclidean group actions can be translated into morphological group actions:
    \begin{align*}
        &l(g_{m, p_2} \Glact X_{\graphG_{p_1, q}}) = \rho_{\spaceM_{\graphG_q}}(g_{m, p_1})^{-1}X_{\graphG_{p_1p_2, q}}\\
        =& \rho_{\spaceM_{\graphG_q}}(g_{m, p_2})\rho_{\spaceM_{\graphG_q}}(g_{m, p_1})^{-1}\rho_{\spaceM_{\graphG_q}}(g_{m, p_2})^{-1}X_{\graphG_{p_1p_2, q}} = g_{m, p_2} \morphOp l(X_{\graphG_{p_1, q}}).
    \end{align*}
    where $g_{m, p_1} \Gcomp g_{m, p_2} = g_{m, p_2} \Gcomp g_{m, p_1}, \forall g_{m, p_1}, g_{m, p_2} \in \groupG_m$. 
    Similarly, for the encoder $h$, the morphological actions can be transformed into Euclidean ones:
    \begin{align*}
        &h(g_{m, p_2} \morphOp X_{\graphG_{p_1, q}}) = \rho_{\spaceM_{\graphG_q}}(g_{m, p_1})\rho_{\spaceM_{\graphG_q}}(g_{m, p_2})X_{\graphG_{p_1p_2, q}}\\
        =& \rho_{\spaceM_{\graphG_q}}(g_{m, p_1 \Gcomp p_2})X_{\graphG_{p_1p_2, q}} = g_{m, p_2} \Glact h(X_{\graphG_{p_1, q}}).
    \end{align*}
    Then for the graph representation $f_\graphG(X_\graphG) = l(z_\graphG(h(X_\graphG)))$, we have
    \begin{align*}
    &f_{\graphG}(g_m \morphOp X_\graphG) = l(z_\graphG(h(g_m \morphOp X_\graphG))) = l(z_\graphG(g_m \Glact h(X_\graphG)))\\
    =& l(g_m \Glact z_\graphG(h(X_\graphG))) = g_m \morphOp l(z_\graphG(h(X_\graphG))) = g_m \morphOp f_{\graphG}(X_\graphG).
\end{align*}
which shows the equivariance property of our MS-HGNN to morphological symmetries.
\end{proof}

\section{Table For Results}
\label{appendix:results}

\vspace{-20pt} 
\begin{table}[!htp]
\centering
\caption{\textbf{Ground reaction force estimation} on the simulated A1 dataset~\citep{butterfield2024mihgnnmorphologyinformedheterogeneousgraph}. This table provides the numerical results corresponding to Fig.~\ref{fig:overview}(b). The metric is the mean$\pm$std of the test RMSE over 4 runs. The best performance is highlighted in \textbf{bold}.}
\label{tab:main-reg-3d}
{\fontsize{9}{11}\selectfont
\begin{tabular}{@{}lcccc@{}}
\toprule
\multirow{2}{*}{\textbf{Test Sequence}} & \multicolumn{2}{c}{\textbf{1D GRF}}                & \multicolumn{2}{c}{\textbf{3D GRF}}                 \\ \cmidrule(l){2-5} 
                                        & \textbf{MI-HGNN}   & \textbf{MS-HGNN (\textcolor[HTML]{8C96C6}{$\mathbb{C}_2$})} & \textbf{MI-HGNN}  & \textbf{MS-HGNN (\textcolor[HTML]{8C96C6}{$\mathbb{C}_2$})} \\ \midrule
Unseen Friction                         & 8.089 $\pm$ 0.102  & \textbf{7.850} $\pm$ 0.154        & 6.437 $\pm$ 0.055 & \textbf{6.355} $\pm$ 0.050        \\
Unseen Speed                            & 9.787 $\pm$ 0.111  & \textbf{9.733} $\pm$ 0.142        & 7.887 $\pm$ 0.064 & \textbf{7.721} $\pm$ 0.048        \\
Unseen Terrain                          & 8.826 $\pm$ 0.144  & \textbf{8.685} $\pm$ 0.136        & 7.332 $\pm$ 0.076 & \textbf{7.208} $\pm$ 0.047        \\
Unseen All                              & 10.245 $\pm$ 0.168 & \textbf{10.137} $\pm$ 0.084       & 8.708 $\pm$ 0.052 & \textbf{8.630} $\pm$ 0.097        \\
\midrule
Total                                   & 9.035 $\pm$ 0.116  & \textbf{8.899} $\pm$ 0.079        & 7.388 $\pm$ 0.056 & \textbf{7.268} $\pm$ 0.032        \\ \bottomrule
\end{tabular}
}
\end{table}

\vspace{-20pt} 
\begin{table}[!htp]
\centering
\caption{\textbf{Contact state detection performance} on the real-world Mini-Cheetah dataset~\citep{DBLP:conf/corl/LinZYG21}. This table reports the numerical results corresponding to Fig.~\ref{fig:contact-detection-exp}-left. Metrics include the mean$\pm$std of F1 score per leg, 16-state accuracy, and the averaged F1 score across 4 runs. \textbf{Bold} and \underline{underlined} values indicate the best and second-best results, respectively.}

\label{tab:main-exp-cls}
{\fontsize{9}{11}\selectfont
\resizebox{\textwidth}{!}{
\begin{tabular}{@{}lccccccc@{}}
\toprule
\textbf{Model (\# of Params)} & \textbf{Sym.}                            & \textbf{Leg-LF F1 $\uparrow$} & \textbf{Leg-LH F1 $\uparrow$}   & \textbf{Leg-RF F1 $\uparrow$} & \textbf{Leg-RH F1 $\uparrow$} & \textbf{State Acc $\uparrow$} & \textbf{Legs-Avg F1 $\uparrow$} \\ \midrule
CNN (10,855,440)            & -                                        & 0.771 $\pm$ 0.013             & 0.899 $\pm$ 0.003               & 0.884 $\pm$ 0.014             & 0.891 $\pm$ 0.024           & 0.731 $\pm$ 0.013             & 0.861 $\pm$ 0.004\\
CNN-Aug (10,855,440)       & $\mathbb{C}_2$                           & 0.854 $\pm$ 0.009             & 0.896 $\pm$ 0.022               & 0.835 $\pm$ 0.015             & 0.906 $\pm$ 0.013        & 0.778 $\pm$ 0.019             & 0.873 $\pm$ 0.007        \\
ECNN    (5,614,770)       & $\mathbb{C}_2$                           & 0.884 $\pm$ 0.012             & 0.887 $\pm$ 0.010               & 0.853 $\pm$ 0.011             & 0.860 $\pm$ 0.016     & 0.788 $\pm$ 0.029             & 0.871 $\pm$ 0.011          \\
MI-HGNN   (1,585,282)     & $\mathbb{S}_4$                           & \underline{0.932} $\pm$ 0.006 & \underline{0.936} $\pm$ 0.010   & \underline{0.927} $\pm$ 0.003 & 0.928 $\pm$ 0.005       & \underline{0.870} $\pm$ 0.010 & \underline{0.931} $\pm$ 0.005       \\ \midrule
MS-HGNN (2,407,810)       & \textcolor[HTML]{8C96C6}{$\mathbb{C}_2$} & 0.928 $\pm$ 0.013             & 0.933 $\pm$ 0.011               & 0.913 $\pm$ 0.016             & \underline{0.937} $\pm$ 0.010 & 0.856 $\pm$ 0.013             & 0.929 $\pm$ 0.009 \\
MS-HGNN  (2,144,642)      & \textcolor[HTML]{6D016B}{$\mathbb{K}_4$} & \textbf{0.936} $\pm$ 0.008    & \textbf{0.944} $\pm$ 0.006      & \textbf{0.930} $\pm$ 0.011    & \textbf{0.948} $\pm$ 0.006 & \textbf{0.875} $\pm$ 0.012    & \textbf{0.939} $\pm$ 0.006   \\ \midrule
\end{tabular}
}
}

\end{table}
\begin{table}[!htp]
\centering
\caption{\textbf{Sample efficiency analysis} on the real-world Mini-Cheetah contact dataset~\citep{DBLP:conf/corl/LinZYG21}. The dataset includes 634.6K training and validation samples. This table presents the legs-averaged F1 scores when training on different proportions of the data. Results correspond to Fig.~\ref{fig:contact-detection-exp}-right.}
\label{tab:sample-efficiency}
{\fontsize{9}{11}\selectfont
\begin{tabular}{@{}lccccccccc@{}}
\toprule
\multirow{2}{*}{\textbf{Model}} & \multirow{2}{*}{\textbf{Sym.}} & \multicolumn{8}{c}{\textbf{Training Samples (\%)}}            \\ \cmidrule(l){3-10} 
                                &                                & 2.50  & 5.00  & 10.00 & 15.00 & 21.25 & 42.50 & 63.75 & 85.00 \\ \midrule
CNN                             & -                              & 0.745 & 0.794 & 0.831 & 0.802 & 0.811 & 0.840 & 0.850 & 0.836 \\
CNN-Aug                         & $\mathbb{C}_2$                 & 0.764 & 0.851 & 0.827 & 0.859 & 0.844 & 0.829 & 0.839 & 0.881 \\
ECNN                            & $\mathbb{C}_2$                 & 0.840 & 0.841 & 0.851 & 0.843 & 0.867 & 0.877 & 0.785 & 0.881 \\
MI-HGNN                         & $\mathbb{S}_4$                 & 0.872 & 0.908 & 0.926 & 0.930 & 0.937 & 0.940 & 0.932 & 0.931 \\
\midrule
MS-HGNN                         & \textcolor[HTML]{8C96C6}{$\mathbb{C}_2$}                 & 0.760 & 0.893 & 0.910 & 0.923 & 0.926 & 0.939 & 0.935 & 0.939 \\
MS-HGNN                         & \textcolor[HTML]{6D016B}{$\mathbb{K}_4$}                 & 0.869 & 0.897 & 0.913 & 0.922 & 0.919 & 0.939 & 0.935 & 0.942 \\ \bottomrule
\end{tabular}
}
\end{table}
\begin{table}[!htp]
\centering
\caption{\textbf{Centroidal momentum estimation results} on the synthetic Solo dataset~\citep{DBLP:conf/rss/ApraezMAM23}. This table reports the numerical results corresponding to Fig.~\ref{fig:com-exp}-left. Evaluation metrics include linear cosine similarity, angular cosine similarity, and test MSE (mean$\pm$std over 4 runs). \textbf{Bold} and \underline{underlined} indicate the best and second-best results, respectively.}
\label{tab:com-exp}
{\fontsize{9}{11}\selectfont
\begin{tabular}{@{}lcccc@{}}
\toprule
\textbf{Model} & \textbf{Sym.}  & \textbf{Lin. Cos. Sim. $\uparrow$} & \textbf{Ang. Cos. Sim. $\uparrow$} & \textbf{Test MSE $\downarrow$}  \\ \midrule
MLP            & -              & 0.9617 $\pm$ 0.0036                & 0.9523 $\pm$ 0.0032                & 0.0499 $\pm$ 0.0037             \\
MLP-Aug        & $\mathbb{C}_2$ & 0.9639 $\pm$ 0.0026                & 0.9535 $\pm$ 0.0029                & 0.0478 $\pm$ 0.0020             \\
MLP-Aug        & $\mathbb{K}_4$ & 0.9647 $\pm$ 0.0023                & 0.9549 $\pm$ 0.0023                & 0.0472 $\pm$ 0.0014             \\
EMLP           & $\mathbb{C}_2$ & 0.9610 $\pm$ 0.0039                & 0.9528 $\pm$ 0.0051                & 0.0503 $\pm$ 0.0053             \\
EMLP           & $\mathbb{K}_4$ & 0.9673 $\pm$ 0.0045                & 0.9580 $\pm$ 0.0032                & 0.0435 $\pm$ 0.0048             \\
MI-HGNN        & $\mathbb{S}_4$ & 0.9301 $\pm$ 0.0017                & 0.5173 $\pm$ 0.0016                & 0.3421 $\pm$ 0.0009             \\
\midrule
MS-HGNN        & \textcolor[HTML]{8C96C6}{$\mathbb{C}_2$} & \textbf{0.9903} $\pm$ 0.0001       & \textbf{0.9804} $\pm$ 0.0015       & \textbf{0.0161} $\pm$ 0.0006    \\
MS-HGNN        & \textcolor[HTML]{6D016B}{$\mathbb{K}_4$} & \underline{0.9877} $\pm$ 0.0007    & \underline{0.9799} $\pm$ 0.0010    & \underline{0.0189} $\pm$ 0.0007 \\ \bottomrule
\end{tabular}
}
\end{table}
\begin{table}[!htp]
    \centering
    \caption{\textbf{Model efficiency comparison between MI-HGNN and MS-HGNN} on the CoM momentum estimation task. This table shows the first part of the numerical results in Fig.~\ref{fig:com-exp}-right, reporting linear cosine similarity with varying parameter counts.}
    \label{tab:com-model-eff-1}
    {\fontsize{9}{11}\selectfont
    \begin{tabular}{@{}cc|cc|cc@{}}
\toprule
\textbf{\# of Param.} & \textbf{MI-HGNN} & \textbf{\# of Param.} & \textbf{MS-HGNN (\textcolor[HTML]{8C96C6}{$\mathbb{C}_2$})} & \textbf{\# of Param.} & \textbf{MS-HGNN (\textcolor[HTML]{6D016B}{$\mathbb{K}_4$})} \\ \midrule
12,934                & 0.8864           & 13,478                & 0.9448                            & 11,366                & 0.9240                            \\
25,478                & 0.9136           & 26,150                & 0.9558                            & 21,926                & 0.9505                            \\
50,438                & 0.9213           & 52,550                & 0.9746                            & 44,230                & 0.9675                            \\
100,102               & 0.9297           & 102,470               & 0.9870                            & 85,830                & 0.9854                            \\
199,174               & 0.9275           & 207,494               & 0.9903                            & 174,470               & 0.9875                            \\
223,878               & 0.9325           & 405,638               & 0.9945                            & 339,590               & 0.9935                            \\
396,806               & 0.9319           & 464,838               & 0.9940                            & 390,726               & 0.9915                            \\
791,558               & 0.9276           & 824,582               & 0.9959                            & 692,998               & 0.9936                            \\ \bottomrule
\end{tabular}
    }
\end{table}

\begin{table}[!htp]
    \centering
    \caption{\textbf{Model efficiency comparison of MLP, MLP-Aug, and EMLP} on the CoM momentum estimation task~\citep{ordoñezapraez2024morphologicalsymmetriesrobotics}. This table presents the second part of the numerical results in Fig.~\ref{fig:com-exp}-right, showing linear cosine similarity across different architectures and symmetry configurations.}
    \label{tab:com-model-eff-2}
    {\fontsize{9}{11}\selectfont
\begin{tabular}{@{}cccc|ccc@{}}
\toprule
\textbf{\# of Param.} & \textbf{MLP} & \textbf{MLP-Aug ($\mathbb{C}_2$)} & \textbf{MLP-Aug ($\mathbb{K}_4$)} & \textbf{\# of Param.} & \textbf{EMLP ($\mathbb{C}_2$)} & \textbf{EMLP ($\mathbb{K}_4$)} \\ \midrule
10,310                & 0.9147       & 0.9170                            & 0.9199                            & -                     & -                              & -                              \\
36,998                & 0.9631       & 0.9660                            & 0.9644                            & 36,992                & 0.9640                         & 0.9718                         \\
139,526               & 0.9737       & 0.9802                            & -                                 & 139,520               & 0.9843                         & 0.9868                         \\
541,190               & 0.9643       & 0.9839                            & 0.9843                            & 541,184               & 0.9865                         & 0.9910                         \\ \bottomrule
\end{tabular}
}
\end{table}
\enlargethispage{-\baselineskip}
\end{document}